\def\Cline#1#2{\@Cline#1#2\@nil}
\def\@Cline#1-#2#3\@nil{%
  \omit
  \@multicnt#1%
  \advance\@multispan\m@ne
  \ifnum\@multicnt=\@ne\@firstofone{&\omit}\fi
  \@multicnt#2%
  \advance\@multicnt-#1%
  \advance\@multispan\@ne
  \leaders\hrule\@height#3\hfill
  \cr}
\crefname{section}{\S}{\S\S}
\Crefname{section}{\S}{\S\S}
\crefname{algorithm}{Alg}{}
\crefname{algorithm}{Alg}{}
\crefname{line}{Line}{}
\crefname{equation}{}{}
\Crefname{equation}{}{}
\algnewcommand{\parState}[1]{\State%
    \parbox[t]{\dimexpr\linewidth-\algmargin}{\strut\hangindent=\algorithmicindent \hangafter=1 #1\strut}}
\algrenewcommand\algorithmicindent{1.0em}%
\newcommand{\ra}[1]{\renewcommand{\arraystretch}{#1}}
\newcommand{\rightcomment}[1]{{\color{gray} \(\triangleright\) {\footnotesize\textit{#1}}}}
\algrenewcommand{\algorithmiccomment}[1]{\hfill \rightcomment{#1}}  %
\algnewcommand{\LineComment}[1]{\State \rightcomment{#1}}
\algnewcommand{\LinesComment}[1]{\State \rightcomment{\parbox[t]{\linewidth-\leftmargin-\widthof{\(\triangleright\) }}{#1}}}
\newcommand{\algorithmicfunc}[1]{\textbf{def} #1 :}
\theoremstyle{plain}
\newtheorem{thm}{Theorem}[section]
\newtheorem{prop}[thm]{Proposition}
\theoremstyle{definition}
\DeclareMathOperator*{\argmax}{argmax}
\definecolor{darkgrey}{rgb}{0.2,0.2,0.2}
\definecolor{grey}{rgb}{0.9,0.9,0.9}
\definecolor{darkblue}{rgb}{0.0,0.0,0.5}
\definecolor{darkred}{rgb}{0.5,0.0,0.0}
\definecolor{darkorange}{rgb}{1.0,0.55,0.0}
\definecolor{darkgreen}{rgb}{0.0,0.6,0.0}
\definecolor{darkyellow}{rgb}{1.0,0.65,0.0}
\definecolor{darkorange}{rgb}{1.0,0.65,0.0}
\definecolor{darkergreen}{rgb}{0.0,0.4,0.0}
\definecolor{lightblue}{rgb}{0.8,0.8,1.0}
\definecolor{lightgreen}{rgb}{0.8,1.0,0.8}
\definecolor{lightred}{rgb}{1.0,0.8,0.8}
\definecolor{lightyellow}{rgb}{1.0,1.0,0.8}
\definecolor{lightorange}{rgb}{1.0,0.9,0.8}
\definecolor{lightgrey}{rgb}{0.96,0.97,0.98}
\definecolor{brilliantlavender}{rgb}{0.96, 0.73, 1.0}
\definecolor{coral}{rgb}{0.94,0.5,0.5}
\definecolor{yellow1}{HTML}{ffff99}
\definecolor{yellow2}{HTML}{ffff66}
\definecolor{blue1}{HTML}{CDEEFF}
\definecolor{blue2}{HTML}{97C1D3}
\definecolor{green1}{HTML}{b8e6b8}
\definecolor{green2}{HTML}{A5D260}
\definecolor{green3}{HTML}{8BC0AC}
\definecolor{green4}{HTML}{5CAD5C}
\definecolor{ryanred}{rgb}{0.64, 0.0, 0.0}
\definecolor{ryanblue}{rgb}{0.13, 0.0, 0.58}
\definecolor{ryangreen}{rgb}{0.12, 0.59, 0.0}
\definecolor{ryanpurple}{rgb}{0.65, 0.0, 0.57}
\crefname{section}{\S}{\S\S}
\Crefname{section}{\S}{\S\S}
\crefname{table}{Table}{Tables}
\crefname{figure}{Fig.}{Figs.}
\crefname{algorithm}{Alg.}{Algs.}
\crefname{algorithmic}{Alg.}{Algs.}
\crefname{appendix}{App.}{}
\crefname{lemma}{Lemma}{}
\Crefname{theorem}{Theorem}{}
\crefname{prop}{Proposition}{}
\crefname{cor}{Corollary}{}
\crefname{align}{Eq.}{}
\crefname{equation}{Eq.}{}
\newcommand{\bleu}{\textsc{bleu}\xspace}
\newcommand{\defn}[1]{\textbf{#1}}
\newcommand{\xx}{\mathbf{x}}
\newcommand{\yy}{\mathbf{y}}
\newcommand{\calB}{B}
\newcommand{\vocab}{V}
\newcommand{\pluseq}{\ \texttt{+=}\ }
\newcommand{\inc}{\pi}
\newcommand{\func}{{f}}
\newcommand{\yt}{\yy_{\scaleto{\leq t}{5pt}}}
\newcommand{\yti}{\yy^{\scaleto{(n)}{6pt}}_{\scaleto{\leq t}{5pt}}}
\newcommand{\appropto}{\mathrel{\vcenter{
  \offinterlineskip\halign{\hfil$##$\cr
    \propto\cr\noalign{\kern2pt}\sim\cr\noalign{\kern-2pt}}}}}
\DeclareMathOperator*{\E}{\mathbb{E}}
\newcommand{\asvar}{\mathbb{V}_{\mathrm{a}}}
\newcommand*{\adj}[1]{%
  \accentset{\mbox{\large\bfseries .}}{#1}}
\newif\iflong
\newcommand{\inLongVersion}[1]{\iflong #1\fi}
\newcommand{\poisson}{P}
\newcommand{\tildepoisson}{\widetilde{P}}
\newcommand{\Q}{Q_t}
\newcommand{\tildeQ}{\widetilde{Q}_t}
\newcommand{\pipoisson}{ \pi_{\scaleto{\poisson}{4pt}}}
\newcommand{\pipoissonmc}{\widehat{\pi}^{\scaleto{\textsc{mc}}{3pt}}_{\scaleto{\poisson}{4pt}}}
\newcommand{\pipoissonis}{\widehat{\pi}^{\scaleto{\textsc{is}}{3pt}}_{\scaleto{\poisson}{4pt}}}
\newcommand{\hY}[1]{\widetilde{Y}_{\!#1}}
\newcommand{\picp}{\pi}
\newcommand{\piQ}{\pi_{\scaleto{\Q}{5pt}}}
\newcommand{\Weight}[2]{\mathrm{W}{#1 \choose #2}}
\newcommand{\bigo}[1]{\mathcal{O}\!\left(#1\right)}
\newcommand{\defeq}[0]{\mathrel{\stackrel{\textnormal{\tiny def}}{=}}}
\newcommand{\defpropto}[0]{\mathrel{\stackrel{\textnormal{\tiny def}}{\propto}}}
\newcommand{\expect}[2]{\underset{#1}{\mathbb{E}}\left[#2\right]}
\newcommand{\ptheta}{p_{\vtheta}}
\renewcommand{\ptheta}{p}
\newcommand{\calY}{\mathcal{Y}}
\newcommand{\nmax}{T}
\newcommand{\Z}{\mathrm{Z}}
\newcommand{\bos}{\textsc{bos}\xspace}
\newcommand{\eos}{\textsc{eos}\xspace}
\newcommand*\iftodonotes{\if@todonotes@disabled\expandafter\@secondoftwo\else\expandafter\@firstoftwo\fi}  %
\newcommand{\ucambridge}{\emoji[openmoji]{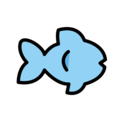}}
\newcommand{\ethz}{\emoji[openmoji]{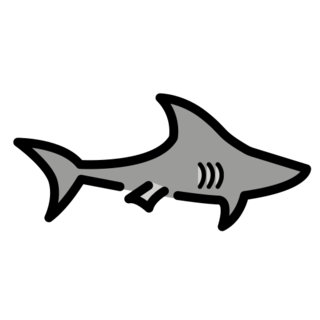}}
\newcommand{\jhu}{\emoji[openmoji]{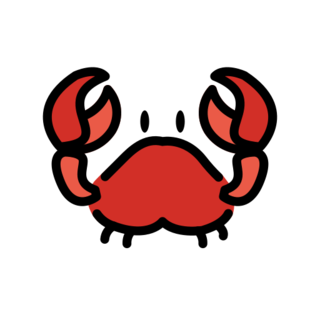}}
\newcommand{\emo}[1]{\raise1.0ex\hbox{\normalfont#1}}
\title{Conditional Poisson Stochastic Beam Search}
\author{
Clara Meister\emo{\ethz}~\;~Afra Amini\emo{\ethz}~\;~Tim Vieira\emo{\jhu}~\;~Ryan Cotterell\emo{\ucambridge,\ethz} \\
  \emo{\ethz}ETH Z\"{u}rich~\;~\emo{\jhu}Johns Hopkins University~\;~\emo{\ucambridge}University of Cambridge \\
 \texttt{\href{mailto:clara.meister@inf.ethz.ch}{clara.meister@inf.ethz.ch}}~\;~\texttt{\href{mailto:aamini@student.ethz.ch}{aamini@student.ethz.ch}}\\ \texttt{\href{mailto:tim.f.vieira@gmail.com}{tim.f.vieira@gmail.com}}~\;~\texttt{\href{mailto:ryan.cotterell@inf.ethz.ch}{ryan.cotterell@inf.ethz.ch}}
}
\date{}
\begin{document}

\maketitle
\setlength{\abovedisplayskip}{8pt}
\setlength{\belowdisplayskip}{8pt}
\setlength{\abovedisplayshortskip}{6pt}
\setlength{\belowdisplayshortskip}{6pt}
\begin{abstract}
Beam search is the default decoding strategy for many sequence generation tasks in NLP.
The set of approximate $K$-best items returned by the algorithm is a useful summary of the distribution for many applications; however, the candidates typically exhibit high overlap and may give a highly biased estimate for expectations under our model.
These problems can be addressed by instead using stochastic decoding strategies. 
In this work, we propose a new method for turning beam search into a stochastic process: Conditional Poisson stochastic beam search.  Rather than taking the maximizing set at each iteration, we sample $K$ candidates without replacement according to the conditional Poisson sampling design. We view this as a more natural alternative to \citet{kool2019stochastic}'s stochastic beam search (SBS).
Furthermore, we show how samples generated under the CPSBS design can be used to build consistent estimators and sample diverse sets from sequence models. 
In our experiments, we observe CPSBS produces lower variance and more efficient estimators than SBS, even showing improvements in high entropy settings.\footnote{Our codebase is publically available at \url{https://github.com/rycolab/cpsbs}.}\looseness=-1
\end{abstract}

\section{Introduction}
Many NLP tasks require the prediction of structured outputs, such as sentences or parse trees, either during decoding or as part of a training algorithm. 
For today's neural architectures, beam search \cite{reddy-1977} has become the decoding algorithm of choice due to its efficiency and empirical performance \cite{AAAI1714571, edunov-etal-2018-understanding, XLNET,meister-etal-2020-best}.
Beam search is a deterministic method, which invites a natural question: 
What is the proper \emph{stochastic} generalization of beam search?
Several recent papers have investigated this question \cite{kool2019stochastic,Kool2020Estimating,shi2020uniquerandomizer}.
Here we build on this line of work and introduce an alternative stochastic beam search that the authors contend is a more faithful stochasticization of the original algorithm in that it recovers standard beam search as a special case.
We name our algorithm conditional Poisson stochastic beam search (CPSBS) 
as we draw on the conditional Poisson sampling scheme \cite{hajek1964} in its construction.
The relationship between CPSBS and other common decoding strategies is displayed visually in \cref{tab:summmary table}.\looseness=-1

At every iteration, CPSBS replaces the top-$K$ operator in the beam search algorithm with conditional Poisson sampling, resulting in a decoding strategy that generates samples-without-replacement. 
Importantly, annealing our sampling distribution at each time step turns local sampling into a local top-$K$ computation and thereby recovers beam search.
We subsequently show that these samples can be used to construct a statistically consistent estimator for the expected value of an arbitrary function of the output.\looseness=-1

\begin{table}
\setlength\arrayrulewidth{1.5pt}
    \setlength{\extrarowheight}{2pt}
    \adjustbox{max width=\linewidth}{
    \begin{tabular}{|r|c|c|}
      \Cline{2-3}{1.5pt}
      \multicolumn{1}{c|}{\tikz{\node[below left, inner sep=1pt] (def) {\footnotesize Operator};%
      \node[above right,inner sep=1pt] (abc) {\footnotesize Set Size};%
      \draw [line width=1.5pt](def.north west|-abc.north west) -- (def.south east-|abc.south east);}}
       & \cellcolor{yellow1}$K=1$  & \cellcolor{yellow2}$K>1$ \\\Cline{1-3}{1.5pt}
      
      \rowcolor{blue1}\multicolumn{1}{|r|}{$\mathrm{argmax}$} & \cellcolor{green1}Greedy Search & \cellcolor{green2}Beam Search \\\Cline{1-3}{1.5pt}
      
       \rowcolor{blue2}& \cellcolor{green3}Ancestral& \cellcolor{green4}{\bf Conditional} \\[-1pt]
       
       \rowcolor{blue2} \multirow{-2}{*}{$\mathrm{sample}$} &\cellcolor{green3}Sampling &\cellcolor{green4}{\bf Poisson Beams} \\\Cline{1-3}{1.5pt}
    \end{tabular}}
    \caption{A comparison of beam-based decoding algorithms for sequence models, by solution set size and objective. The $\argmax$ and sample variants are related through annealing:
    As the annealing parameter of the distribution $\tau\! \rightarrow \!0$, sampling turns into computing an $\argmax$ (see \cref{prop:anneal}).\looseness=-1
    }\label{tab:summmary table}
    \vspace{-1em}
  \end{table}

In our experiments with neural machine translation models,  we observe that CPSBS leads to better estimates of expected \bleu and conditional model entropy than SBS and the sum-and-sample estimator \cite{Kool2020Estimating}, distinctly outperforming Monte Carlo sampling for both small sample sizes and low temperatures. Furthermore, we find that CPSBS can be used as a diverse sampling strategy. 
We take these results as confirmation that CPSBS is a useful tool in the newfound arsenal of sampling strategies for neural sequence models.

\section{Beam Search}\label{sec:gen}
In this section, we overview the necessary background on neural sequence models
and beam search in order to motivate our algorithm in \cref{sec:cps}.

\paragraph{Neural Sequence Models.}
We consider locally normalized probabilistic models over sequences $\yy$:\looseness=-1
\begin{equation}\label{eq:model}
    \ptheta(\yy) = \prod_{t=1}^{|\yy|}\ptheta(y_t \mid \yy_{<t})
\end{equation}
\noindent where $\yy$ is a member of a set of well-formed outputs $\calY$. In the context of language generation models, well-formed outputs are sequences of tokens $\yy = \langle y_1,y_2, \dots \rangle$ from a vocabulary $\vocab$; all $\yy \in \calY$ begin and end with special tokens \bos and \eos, respectively. We use $\yy_{<t}$ to represent the subsequence $\langle y_1,\dots,y_{t-1}\rangle$. In this work, we consider the setting where the maximum sequence length is upper-bounded; we denote this upper bound $\nmax > 0$.
Without loss of generality, we may condition $\ptheta$
on an input $\xx$, as is necessary for machine translation and other conditional generation tasks.

\paragraph{Beam Search.}
Beam search is a commonly used search heuristic for finding an approximate solution to the following optimization problem:
\begin{equation}
    \yy^\star = \argmax_{\yy \in \calY}\, \log \ptheta(\yy) 
    \label{eq:MAP}
\end{equation}
Its most straightforward interpretation is as a pruned version of
breadth-first search, where the breadth of the search is narrowed to the top-$K$ candidates. 
However, here we will present beam
search in a nonstandard lens \cite{meister-etal-2020-beam,meister-etal-2021-determinantal} in order to emphasize the connection with our stochastic generalization in \cref{sec:cps}.
Specifically, we present the algorithm
as iteratively finding the highest-scoring
set under a specific set function.

Under this paradigm, the initial beam $Y_0$ contains only the \bos token.  At subsequent steps $t=1,\ldots, T$, beam search  selects the $K$ highest-scoring candidates from the set $Y_{t-1} \circ \vocab$ that we define below:\footnote{Sequences already ending in \eos are not extended by $y \in \vocab$ and are simply added to the set ``as is.''} 
\begin{equation}\label{eq:bt}
    Y_{t-1} \circ \vocab \defeq \{ \yy \circ y \mid \yy \in Y_{t-1} \textbf{ and } y \in \vocab \}
\end{equation}
\noindent
where $\circ$ is sequence concatenation. 
Those candidate sets with collectively higher probability under the model $p$ have higher score. 
This process continues until all $\yy \in Y_t$ end in \eos, or $t = \nmax$. For notational ease, we define $\calB_t \defeq  Y_{t-1} \circ \vocab$; throughout this paper, we will assume $|\calB_t| = N$ and identify the elements of $\calB_t = \{\yy_{\leq t}^{(1)}, \ldots, \yy_{\leq t}^{(N)}\}$ with the integers $\{1, \ldots, N\}$. \looseness=-1

We can formulate the time-step dependent set function whose $\argmax$
beam search finds as \begin{equation}
\Q(Y_t \mid Y_{t-1}) \!\defpropto\! \begin{cases}
\prod_{n \in Y_t} w_n & \textbf{if } |Y_t| \!=\! K \\
0 & \textbf{otherwise}
\end{cases}\label{eq:q}
\end{equation}
\noindent where $w_n$ is the weight of the $n^{\text{th}}$ element of $B_t$. 
To recover beam search, we set our weights equal to probabilities under a model $p$, i.e., $w_n = p\!\left(\yy_{\leq t}^{(n)}\right)$. 
Note that we leave the constraint that $Y \subseteq B_t$ implicit in \cref{eq:q}.
As should be clear from notation, this set function only assigns nonzero scores to
subsets of $Y_{t-1} \circ \vocab$ of size exactly $K$ and the assigned score is proportional to
the product of the probability of the candidates under the model $\ptheta$. 
Putting this all together, beam search
may be viewed as the following iterative process:
\begin{tcolorbox}[ams align,colback=gray!35!white,colframe=white!5!white,arc=0pt,outer arc=0pt]%
Y_0 &= \{\bos \} \label{eq:it_subset} \\
Y_t &  = \argmax_{Y_t' \subseteq B_t}\,\,\Q(Y_t' \mid Y_{t-1})\label{eq:beam_subset} \\
& \!\!\!\!\!\textbf{return}\,\, Y_T
\end{tcolorbox}

\section{Conditional Poisson Stochastic Beams}\label{sec:cps}
Our paper capitalizes on a very simple observation: Rather than taking its $\argmax$, we may renormalize \cref{eq:q} into a distribution and sample-without-replacement a size $K$ set at each iteration:

  \begin{tcolorbox}[ams align,colback=gray!35!white,colframe=white!5!white,arc=0pt,outer arc=0pt]
\!\!\!Y_0 &= \{\bos \}  \\
Y_t &  \sim \Q(\cdot \mid Y_{t-1}) \label{eq:cp-sampling} \\
& \!\!\!\!\!\textbf{return}\,\, Y_T
    \end{tcolorbox}

\noindent This recursion corresponds to performing \defn{conditional Poisson sampling} (CPS; \citealt{hajek1964}; see \cref{app:tutorial} for overview), a common sampling-without-replacement design \cite{tille_book},\footnote{A sampling design is a probability distribution over sets of samples.} at every time step. Thus we term this scheme conditional Poisson stochastic beam search.
CPSBS gives us a probability distribution over \emph{sets} of candidates of size $K$, i.e., the final beam $Y_T$. 
We denote the CPSBS distribution $\poisson$ and we write $Y_T \sim \poisson$ to indicate that $Y_T$ is the stochastic beam at the end of a sampling run.
We may write $P(Y_T)$ as a marginal probability, summing over all sequences of beams that could have resulted in $Y_T$:\footnote{This formulation reveals that it is wildly intractable to compute $P(Y_T)$. }
\begin{align}\label{eq:inc-prob}
    \poisson(Y_T) = \sum_{Y_1} \!{\cdots}\! \sum_{Y_{T-1}} \prod_{t=1}^T \Q(Y_t \mid Y_{t-1})
\end{align}
Note the structural zeros of $\Q$ prevent any incompatible sequence of beams. 
We provide a theoretical analysis of the scheme in \cref{sec:estimator} and an empirical analysis in \cref{sec:experiments}.\looseness=-1

\paragraph{Normalizing $\Q(\cdot \mid Y_{t-1})$.}
At each time step $t$, we compute $\Q(\cdot \mid Y_{t-1})$---a distribution over subsets of size $K$ of a base set $\calB_t$---using the CPS design. 
The normalizing constant for this distribution is defined as\looseness=-1
\begin{equation}\label{eq:naive}
    \Z_t \defeq \sum_{\substack{Y_t \subseteq \calB_t, \\ |Y_t| = K}} \prod_{n \in Y_t} w_n \,
\end{equation}
Despite there being exponentially many summands, we can sum over all ${N \choose K}$ subsets in $\bigo{N K}$ time via the following recurrence relation:\footnote{The reader may recognize this recurrence as the weighted generalization of Pascal's triangle, ${n \choose k} = {n-1 \choose k} + {n-1 \choose k-1}$, which is why we chose the notation $\Weight{n}{k}$.}\looseness=-1
\begin{equation}
\Weight{n}{k} = \begin{cases}
1 & \!\!\!\!\textbf{if } k = 0 \\ 
\Weight{n-1}{k} + w_n \Weight{n-1}{k-1} & \!\!\!\!\textbf{if } k \in (0, n) \\
0 & \!\!\!\!\textbf{otherwise }
\end{cases}\nonumber
\end{equation}
We give complete pseudocode in \cref{app:pseudocode}. Correctness of this algorithm is shown in \citet{taskar_dpp}.
The normalizing constant can then be efficiently computed as 
\begin{equation}
    \Z_t = \Weight{N}{K}
\end{equation}
\paragraph{Sampling from $\Q(\cdot \mid Y_{t-1})$.} 
We can efficiently sample sets from $\Q(\cdot \mid Y_{t-1})$ using the algorithm below:
\begin{algorithmic}[1]\label{alg:sample}
\State $Y_t \gets \emptyset$ \rightcomment{Initialization}
\For{$n = N \ldots 1$} 
\State $k \gets K - |Y_t|$ \rightcomment{Number of remaining elements}
\State Add the $n^{\text{th}}$ element of $B_t$ to $Y_t$ with prob.
$$
\frac{w_n \, \Weight{n-1}{k-1}}{\Weight{n}{k}}
$$
\EndFor
\State \Return $Y_t$ \Comment{Guaranteed to have size $K$}
\end{algorithmic}
\noindent In words, the algorithm considers adding each element one at a time until $K$ elements have been sampled.
Notice that line 4 adjusts the probability of sampling item $n$ given that $|Y_t|$ items have already been sampled, which ensures that exactly $K$ elements are sampled at termination.

\paragraph{Setting $w_n$.}\label{prop:anneal} 
The weight assigned to the $n^{\text{th}}$ item of $\calB_t$ directly affects its probability of being included in the sampled set, i.e., $\mathrm{Pr}\left(\yy_{\leq t}^{(n)}\in Y_t\right)$, also termed an item's \defn{inclusion probability}.
In this paper, we write $\piQ\Big(\yy_{\leq t}^{(n)}\!\mid\!Y_{t-1}\Big)$ to denote the inclusion probability under the distribution $\Q(\cdot \mid Y_{t-1})$, defined as:\looseness=-1
\begin{align}
    \piQ\Big(\yy_{\leq t}^{(n)}&\mid Y_{t-1}\Big)  \\
    &\defeq \sum_{Y_t} \Q(Y_t\mid Y_{t-1}) \mathbbm{1}\{\yy_{\leq t}^{(n)} \in Y_t \} \nonumber 
\end{align}
One strategy is to choose $w_n$ at time step $t$ such that $ \piQ\Big(\yy_{\leq t}^{(n)}\mid Y_{t-1}\Big) \approx p(\yy_{\leq t}^{(n)})$.
This choice recovers beam search when we \emph{anneal} our chosen weights $w_n \mapsto w_n^{1/\tau}$: as the temperature parameter $\tau \to 0$, the CP distribution will assign probability 1 to the set containing the top-$K$ elements.\footnote{In the event of ties, annealed CP will converge to a distribution that breaks ties uniformly at random.}
 
Finding $w_n$'s that result in pre-specified inclusion probabilities is possible, but it requires solving a numerical optimization problem \cite{aires1999algorithms,GRAFSTROM20092111}. 
Further, in CPSBS, we will be sampling from a different distribution at each time step and it would be quite slow to solve the numerical optimization problem each iteration. 
Luckily, the choice of $w_n = p(\yy_{\leq t}^{(n)})/(1-p(\yy_{\leq t}^{(n)}))$
yields a good approximation to the target inclusion probabilities in both theory and practice \cite{hajek1981sampling,bondesson,aires1999algorithms}.%

\section{Statistical Estimation with Conditional Poisson Stochastic Beam Search}\label{sec:estimator}
In this section, we discuss statistical estimation with CPSBS samples.  
To that end, we construct two estimators with different properties.
However, only the second estimator provides good performance in practice, which is discussed later in \cref{sec:experiments}.
\subsection{The Horvitz--Thompson Estimator}\label{sec:ht}
We build upon the Horvitz--Thompson (HT) estimator \cite{HorvitzThompson1952}, 
which is a common technique for estimation from sampling-without-replacement (SWOR) schemes.

Let $\func: \calY \rightarrow \mathbb{R}^d$ be a function whose expected value under $p$ we seek to approximate: 
\begin{equation}\label{eq:expectation}
    \mathbb{E}_{\yy \sim p}\left[ \func(\yy) \right]= \sum_{\yy \in \calY} p(\yy) \func(\yy)
\end{equation}
The Monte Carlo estimator of the above quantity is\looseness=-1
\begin{equation}\label{eq:montecarlo}
    G_{\mathrm{MC}} \defeq \frac{1}{M}\sum_{m=1}^M f(\yy^{(m)})
\end{equation}where $\yy^{(m)} \overset{\mathrm{i.i.d.}}{\sim} p$.
 However, in the special case of sampling from a finite population---which is extremely common in NLP---it can be very wasteful. For example, if a distribution is very peaked, it will sample the same item repeatedly; this could lead to inaccurate approximations for some $f$.
 As a consequence, the mean square error (MSE) of the estimator with respect to $\mathbb{E}_{\yy \sim p}\left[ \func(\yy) \right]$ can be quite high for small $M$. Indeed, we see this empirically in \cref{fig:rmse-experiment}.\looseness=-1

Taking samples without replacement allows us to cover more of the support of $p$ in our estimate of $\mathbb{E}_{\yy \sim p}\left[ \func(\yy) \right]$. 
However, we must take into account that our samples are no longer independent, i.e., $\yy^{(m)} \overset{\mathrm{i.i.d.}}{\not\sim} p$.
We now define the HT estimator, using notation specifically for the case of CPSBS:
\begin{align}\label{eq:ht}
    G_{\mathrm{HT}} \defeq \sum_{\yy \in Y_T} \frac{p(\yy)}{\pipoisson(\yy)} \func(\yy) 
\end{align}
As should be clear from notation, we assume $Y_T \sim \poisson$; further, we use $\pipoisson(\yy)$ to denote the \defn{inclusion probability} of $\yy$ under CPSBS, i.e., the probability of sampling a set $Y_T \sim \poisson$ that contains the element $\yy$:\looseness=-1
\begin{align}
    &\pipoisson(\yy) 
    = \sum_{Y_T}\! \poisson( Y_T) \mathbbm{1}\{\yy \in Y_T \} \label{eq:monster-inclusion}\\
    &= \sum_{Y_1} \!{\cdots}\! \sum_{Y_{T}} \prod_{t=1}^T \Q(Y_t \mid Y_{t-1})\mathbbm{1}\left\{\yy_{\leq t} \in Y_t \right\} \nonumber
\end{align}
\noindent In \cref{eq:ht}, the distribution $\pipoisson$ may be viewed as a  proposal distribution in the sense of importance sampling \cite{mcbook} and $1/\pipoisson(\yy)$ as the corresponding importance weight corrections. If we can exactly compute $\pipoisson$, then
the HT estimator
is unbiased\footnote{Note that it is common to normalize \cref{eq:ht} by the sum of importance weights, i.e., divide $G_{\mathrm{HT}}$ by the sum $\sum_{\yy \in Y{T}}\piQ(\yy)$. While this leads to a biased estimator, it can significantly reduce variance, which is often worthwhile.\looseness=-1} (see \cref{app:ht} for proof). However, the summation in \cref{eq:monster-inclusion} is  intractable so we resort to estimation.\looseness=-1

\subsection{Estimating Inclusion Probabilities}
In this section, we develop statistical estimators of the inclusion probabilities under conditional Poisson stochastic beam search. Note that in order to maintain the unbiasedness of the HT estimator, we must estimate the reciprocal inclusion probabilities.\footnote{Since by Jensen's inequality $\expect{}{1/X} \geq 1/\expect{}{X}$ for $X\in \mathbb{R}_+$, the reciprocal of an unbiased estimate of $\pipoisson(\yy)$ is not an unbiased estimate of $1/\pipoisson(\yy)$\looseness=-1} However, these are not straightforward to estimate. 
Thus, we attempt to estimate the inclusion probabilities directly and take the reciprocal of this estimator.
This strategy leads to a consistent, but biased, estimator.
An important caveat: the analysis in this section only applies to the estimators of the inclusion probabilities themselves. 
Further analysis may be undertaken to analyze the variance of the HT estimators that make use of these estimators.

\subsubsection{Na{\"i}ve Monte Carlo }
One obvious way to derive an inclusion probability estimator 
is via Monte Carlo estimation:
\begin{equation}\label{eq:mc-naive}
    \pipoissonmc(\yy) \defeq \frac{1}{M}\sum_{m=1}^M \mathbbm{1}\Big\{\yy \in Y_{T}^{(m)}\Big\}
\end{equation}
where $Y^{(m)} \sim \poisson$.%
\begin{restatable}{prop}{proptwo}\label{prop:estimator-2}
Eq.~\ref{eq:mc-naive} has the following two properties:
\begin{enumerate}[label=\roman*)]
\item $\pipoissonmc$ is an unbiased estimator of $\pipoisson$ and
\begin{equation}\label{eq:mc-var}
\mathbb{V}\left[\pipoissonmc\right] = \frac{1}{M} \left(\pipoisson(\yy ) - \pipoisson(\yy)^2\right)
\end{equation}
 \item ${1}/{\pipoissonmc}$ is a consistent estimator of ${1}/{\pipoisson}$ with asymptotic variance \begin{equation}
\asvar\left[\frac{1}{\pipoissonmc(\yy)}\right]\!=\! \frac{1}{M}\left(\frac{1}{\pipoisson(\yy)^3}\!-\! \frac{1}{\pipoisson(\yy)^2}\right)
\end{equation}
\end{enumerate}
Here $\asvar$ denotes the \defn{asymptotic variance}, which is the variance after the number of samples $M$ is large enough such that the central limit theorem has kicked in \cite{bickel2015mathematical}.
\end{restatable}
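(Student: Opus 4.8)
My plan is to observe that the Monte Carlo estimator in \cref{eq:mc-naive} is simply an average of independent and identically distributed Bernoulli random variables, after which both parts follow from standard results. Concretely, for each $m$ set $X_m \defeq \mathbbm{1}\{\yy \in Y_T^{(m)}\}$. By the definition of the inclusion probability in \cref{eq:monster-inclusion}, each $X_m$ is a Bernoulli trial with success probability $\pipoisson(\yy)$, and since the beams $Y_T^{(m)} \sim \poisson$ are drawn independently, the $X_m$ are i.i.d.

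For part i), unbiasedness is immediate from linearity of expectation: $\E[\pipoissonmc(\yy)] = \frac{1}{M}\sum_{m=1}^M \E[X_m] = \pipoisson(\yy)$. For the variance, independence gives $\mathbb{V}[\pipoissonmc(\yy)] = \frac{1}{M^2}\sum_{m=1}^M \mathbb{V}[X_m]$, and since a single Bernoulli has variance $\pipoisson(\yy)(1-\pipoisson(\yy))$, this collapses to the claimed $\frac{1}{M}(\pipoisson(\yy) - \pipoisson(\yy)^2)$.

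For part ii), I would first establish consistency: by the strong law of large numbers $\pipoissonmc(\yy) \to \pipoisson(\yy)$ almost surely, and since $\pipoisson(\yy) > 0$ for any $\yy$ reachable with positive probability, the map $x \mapsto 1/x$ is continuous at the limit, so the continuous mapping theorem yields $1/\pipoissonmc(\yy) \to 1/\pipoisson(\yy)$. To obtain the asymptotic variance I would apply the delta method to $g(x) = 1/x$, with $g'(x) = -1/x^2$. The central limit theorem gives $\sqrt{M}(\pipoissonmc(\yy) - \pipoisson(\yy)) \xrightarrow{d} \mathcal{N}(0, \pipoisson(\yy)(1-\pipoisson(\yy)))$ using the single-term variance from part i), and the delta method transfers this to $\sqrt{M}(1/\pipoissonmc(\yy) - 1/\pipoisson(\yy)) \xrightarrow{d} \mathcal{N}(0, g'(\pipoisson(\yy))^2\,\pipoisson(\yy)(1-\pipoisson(\yy)))$. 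Substituting $g'(\pipoisson(\yy))^2 = 1/\pipoisson(\yy)^4$, simplifying $\frac{1}{\pipoisson(\yy)^4}\pipoisson(\yy)(1-\pipoisson(\yy)) = \frac{1}{\pipoisson(\yy)^3} - \frac{1}{\pipoisson(\yy)^2}$, and dividing by $M$ recovers the stated expression.

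The computations themselves are routine; the only point requiring genuine care is the passage from "consistent but biased" to a variance claim. The reciprocal $1/\pipoissonmc(\yy)$ is ill-defined whenever $\yy$ appears in none of the $M$ sampled beams (so $\pipoissonmc(\yy) = 0$), which is exactly why the result can only be stated asymptotically: it describes behavior once $M$ is large enough that $\pipoissonmc(\yy)$ concentrates near the strictly positive $\pipoisson(\yy)$ and the CLT/delta-method regime has taken hold. I would therefore emphasize that the delta method is precisely the tool justifying an \emph{asymptotic} variance despite the finite-sample bias flagged earlier via Jensen's inequality, and that its validity hinges on $\pipoisson(\yy) > 0$ together with the differentiability of $g$ there.
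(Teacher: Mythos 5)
Your proof is correct and follows essentially the same route as the paper's: the Bernoulli mean/variance computation for part i), and the strong law of large numbers plus the continuous mapping theorem for consistency followed by the delta method (with $g(x)=1/x$, hence the $1/\pipoisson(\yy)^4$ factor) for the asymptotic variance in part ii). Your added remarks on the necessity of $\pipoisson(\yy)>0$ and the ill-definedness of the reciprocal at $\pipoissonmc(\yy)=0$ are sound and slightly more careful than the paper's own presentation, but they do not change the argument.
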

\begin{proof}
Proof given in \cref{app:proptwo}. 
\end{proof}
\noindent Qualitatively, what this result tells us is that if we are asking about the inverse inclusion probability of a candidate with a low inclusion probability, our estimator may have very high variance. 
Indeed, it is unlikely that we could derive an estimator without this qualitative property due to the presence of the inverse. 
Moreover, the estimator given in \cref{eq:mc-naive} is not of practical use: If we are interested in the inverse inclusion probability of a specific candidate $\yy$, then we may have to sample a very large number of beams until we eventually sample one that actually contains $\yy$.
In practice, what this means is that our estimate of the inclusion probably for a rare $\yy$ will often be zero, which we cannot invert.\footnote{One solution would be to smooth our estimates of the inclusion probabilities, adding a small $\varepsilon$ to ensure that we do not divide by zero, but the authors find our next approach to be more methodologically sound.} Instead, we pursue an importance sampling strategy for estimating $\pipoisson(\yy)$, which we outline in the next section.\looseness=-1

\subsubsection{Importance Sampling}\label{sec:low-var}
We now turn to an inclusion probability estimator that is based on importance sampling.
Recall from \cref{eq:monster-inclusion} that the inclusion probability for $\yy$ is a massive summation over sequences of possible beams ${Y_1}, \ldots, Y_{T}$ that could have generated $\yy$.
Rather than computing the sum, we will estimate the sum through taking samples.
Our procedure starts by generating \defn{hindsight samples} $\hY{1}, \ldots, \hY{T}$ from the following proposal distribution that is conditioned on $\yy$:\looseness=-1
\begin{equation}\label{eq:beam_cp_p}
\tildeQ(\hY{t}  \mid \hY{t-1}, \yy) 
\defeq \frac{\Q(\hY{t}  \mid \hY{t-1})}{\piQ(\yy_{\leq t}\mid \hY{t-1})}
\end{equation}
In words, $\tildeQ$ is $\Q$ conditioned on its sets $Y_t$ containing the prefix $\yt$ (thus it is always the case that $\yt\in \hY{t}$).\footnote{This proposal distribution can be realized through a minor modification of our algorithm in \cref{alg:sample}, where $w(\yy)$ corresponding to $\yti$ is placed at the beginning and added to $Y_t$ deterministically.\looseness=-1}
For brevity, we omit an explicit notational dependence of $\hY{t}$ and $\tildeQ$ on $\yy$.
\begin{restatable}{lemma}{lemmaone}\label{lem:decomposition}
The joint proposal distribution $\tildepoisson(\hY{1}, \ldots, \hY{\nmax}) \defeq \prod_{t=1}^T \tildeQ(\hY{t} \mid \hY{t-1})$ may be expressed in terms of $\poisson$ as follows:
\begin{align}\label{eq:tildepoisson}
    \tildepoisson&(\hY{1}, \ldots, \hY{\nmax}) = \frac{\poisson(\hY{1}, \ldots, \hY{\nmax})}{\prod_{t=1}^{T} \piQ(\yy_{\leq t}\mid \hY{t-1})} 
\end{align}
where we define $\poisson(\hY{1}, \ldots, \hY{\nmax}) \defeq \prod_{t=1}^T \Q(\hY{t} \mid \hY{t-1})$ as the joint probability of the beams $\hY{1}, \ldots, \hY{\nmax}$ under the original distributions $\Q$. 
We omit that both $\poisson$ and $\tildepoisson$ are conditioned on $Y_0$. 
\end{restatable}
\begin{proof}
See \cref{app:lemmaone}.
\end{proof}
\noindent %
In terms of computation, \cref{eq:beam_cp_p} makes use of the fact that the per-time-step inclusion probability $\piQ(\yt)$
for a given $\Q$ can be computed efficiently with dynamic programming using the following identity:\looseness=-1
\begin{align}
    \piQ(\yy_{\leq t}^{(n)}\mid Y_{t-1}) &\defeq \sum_Y \Q(Y_t)\mathbbm{1}\Big\{\yy_{\leq t}^{(n)} \in Y_t\Big\} \nonumber \\
&= \frac{w_n}{\Z} \frac{\partial \Z}{\partial w_n} \label{eq:incl-from-grad}
\end{align}
For completeness, we give pseudocode in \cref{app:pseudocode}.
Given samples $\hY{T}^{(m)} \sim \tildepoisson$ for $\tildepoisson$ defined in \cref{eq:tildepoisson} with respect to a given $\yy$, we propose the following unbiased estimator of inclusion probabilities:\looseness=-1
\begin{equation}\label{eq:is-good}
   \pipoissonis(\yy) \defeq \frac{1}{M}\sum_{m=1}^M \prod_{t=1}^T \piQ(\yt\mid \hY{t-1}^{(m)})
\end{equation}
where $\yt$ is a prefix of $\yy$.
One simple derivation of \cref{eq:is-good} is as an importance sampler.
We start with the equality given in \cref{eq:monster-inclusion} and perform the standard algebraic manipulations witnessed in importance sampling:
\allowdisplaybreaks
\begin{align}
    &\sum_{Y_T} \poisson(Y_T) \mathbbm{1}\{\yy \in Y_T \} \\
   &=  \sum_{Y_1} \cdots \sum_{Y_T}\poisson(Y_1, \ldots, Y_T) \mathbbm{1}\{\yy \in Y_T \} \nonumber \\
   &= \sum_{\hY{1}}\cdots \sum_{\hY{T}} \poisson(\hY{1}, \ldots, \hY{T}) \frac{\tildepoisson(\hY{1}, \ldots, \hY{T})}{\tildepoisson(\hY{1}, \ldots, \hY{T})}  \nonumber\\
   &=  \sum_{\hY{1}}\cdots \sum_{\hY{T}} \tildepoisson(\hY{1}, \ldots, \hY{T})  \frac{\poisson(\hY{1}, \ldots, \hY{T})}{\tildepoisson(\hY{1}, \ldots, \hY{T})} \nonumber \\
     &\overset{\textit(i)} {=} \sum_{\hY{1}}\cdots \sum_{\hY{T}} \tildepoisson(\hY{1}, \ldots, \hY{T}) \prod_{t=1}^T \piQ(\yy_{\leq t}\mid \hY{t-1})  \nonumber %
\end{align}
where equality \textit{(i)} above follows from \cref{lem:decomposition}.
This derivation serves as a simple proof that \cref{eq:is-good} inherits unbiasedness from \cref{eq:ht}.

\begin{restatable}{prop}{propthree}\label{prop:is}
\cref{eq:is-good} has the following two properties:
\begin{enumerate}[label=\roman*)]
\item $\pipoissonis$ is an unbiased estimator of $\pipoisson$;
\item The estimator of the inverse inclusion probabilities $1/\pipoissonis(\yy)$
is consistent with the following upper bound on the asymptotic variance:

\begin{align}
\asvar&\left[\frac{1}{\pipoissonis(\yy)}\right] \leq \frac{1}{M} \frac{r - 1}{\pipoisson(\yy)^2}
\end{align}
where we assume that the following bound:
\begin{equation}
\frac{\prod_{t=1}^T \piQ(\yy_{\leq t}\mid \hY{t-1})}{\pipoisson(\yy)} \leq r
\end{equation}
holds for all $\hY{1}, \ldots, \hY{T}$.
\end{enumerate}
\end{restatable}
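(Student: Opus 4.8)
The plan is to establish the two claims separately, reusing the importance-sampling derivation already given in the excerpt for unbiasedness and then applying the delta method for the inverse. For part~(i), the unbiasedness of $\pipoissonis$ is essentially already proved by the chain of equalities culminating in equality \textit{(i)}, which shows that $\expect{}{\prod_{t=1}^T \piQ(\yy_{\leq t}\mid \hY{t-1})} = \pipoisson(\yy)$ when the expectation is taken over $\hY{} \sim \tildepoisson$. Since $\pipoissonis$ averages $M$ i.i.d.\ copies of this quantity, linearity of expectation gives $\expect{}{\pipoissonis(\yy)} = \pipoisson(\yy)$ directly. So the first task is just to state this cleanly and point back to \cref{lem:decomposition}.

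For part~(ii), I would first compute the variance of the single-sample estimand $X \defeq \prod_{t=1}^T \piQ(\yy_{\leq t}\mid \hY{t-1})$ under $\tildepoisson$. Because $X$ is the importance weight $\poisson/\tildepoisson$ evaluated along a sampled beam sequence, and because $\expect{}{X} = \pipoisson(\yy)$ from part~(i), we have $\Var[X] = \expect{}{X^2} - \pipoisson(\yy)^2$. The boundedness assumption $X/\pipoisson(\yy) \leq r$ lets me bound the second moment: $\expect{}{X^2} \leq r\,\pipoisson(\yy)\,\expect{}{X} = r\,\pipoisson(\yy)^2$, using $X \leq r\,\pipoisson(\yy)$ inside one factor and $\expect{}{X} = \pipoisson(\yy)$ on the other. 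This yields $\Var[X] \leq (r-1)\,\pipoisson(\yy)^2$, and hence $\mathbb{V}[\pipoissonis(\yy)] \leq \frac{1}{M}(r-1)\,\pipoisson(\yy)^2$ after dividing by $M$ for the sample mean.

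The final step is to transfer this variance bound through the reciprocal map $g(z) = 1/z$ using the delta method, exactly as in the proof of Proposition~\ref{prop:estimator-2}(ii). Since $\pipoissonis$ is an unbiased, consistent estimator of $\pipoisson(\yy)$, the central limit theorem gives asymptotic normality of $\sqrt{M}(\pipoissonis(\yy) - \pipoisson(\yy))$, and the delta method says the asymptotic variance of $g(\pipoissonis(\yy))$ scales by $g'(\pipoisson(\yy))^2 = 1/\pipoisson(\yy)^4$. Multiplying the bound on $\mathbb{V}[\pipoissonis]$ by this factor gives
\begin{equation}
\asvar\left[\frac{1}{\pipoissonis(\yy)}\right] \leq \frac{1}{M}\,(r-1)\,\pipoisson(\yy)^2 \cdot \frac{1}{\pipoisson(\yy)^4} = \frac{1}{M}\frac{r-1}{\pipoisson(\yy)^2},
\end{equation}
which is the claimed bound. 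Consistency of $1/\pipoissonis$ follows from the continuous mapping theorem applied to the consistent estimator $\pipoissonis$, noting $\pipoisson(\yy) > 0$.

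I expect the main obstacle to be the careful handling of the second-moment bound: one must use the assumption $X \leq r\,\pipoisson(\yy)$ on precisely one of the two factors of $X^2$ while keeping the other factor's expectation equal to $\pipoisson(\yy)$, rather than naively squaring the bound (which would give the looser $r^2$ instead of $r$ and a different constant). A secondary subtlety is justifying the delta-method application rigorously, which requires that $\pipoisson(\yy)$ be bounded away from zero so that $g$ is smooth in a neighborhood of the limit; this is implicit in treating $\pipoisson(\yy)$ as a fixed positive constant, matching the asymptotic framing already used in \cref{prop:estimator-2}.
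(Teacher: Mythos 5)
Your proposal is correct and follows essentially the same route as the paper's proof: unbiasedness via the importance-sampling identity from \cref{lem:decomposition}, consistency via the strong law of large numbers and the continuous mapping theorem, a second-moment bound that applies the assumption $\prod_t \piQ(\yy_{\leq t}\mid \hY{t-1}) \leq r\,\pipoisson(\yy)$ to only one factor (yielding $(r-1)\pipoisson(\yy)^2$ rather than the looser $r^2$ bound), and finally the delta method with the factor $1/\pipoisson(\yy)^4$. No gaps; the subtleties you flag are exactly the ones the paper's argument relies on.
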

\begin{proof}
Proof given in \cref{app:propthree}.
\end{proof}
\noindent \cref{prop:is} tells us that we can use \cref{eq:is-good} to construct a consistent estimator of the inverse inclusion probabilities. 
Moreover, assuming $\mathrm{Pr}\left(\yy \!\in\! Y_T \right) > 0$, then we have that the importance sampling yields an estimate  $\pipoissonis(\yy) > 0$, unlike the Monte Carlo estimator $\pipoissonmc(\yy)$.
We further see that, to the extent that $\prod_{t=1}^T \piQ(\yy_{\leq t}\mid \hY{t-1})$ approximates $\pipoisson(\yy)$, then we may expect the variance of \cref{eq:is-good} to be small---specifically in comparison to the na{\"i}ve Monte Carlo estimator in \cref{eq:mc-naive}---which is often the case for estimators built using importance sampling techniques when a proposal distribution is chosen judiciously \cite{monte-carlo}.
Thus, given our estimator in \cref{eq:is-good}, we can now construct a practically useful estimator for $\E_{\yy \sim p}\left[  f(\yy)\right]$ using the HT estimator in \cref{eq:ht}. 
In the next section, we observe that this estimator is quite efficient in the sequence model setting.\looseness=-1

\section{Experiments}\label{sec:experiments}

We repeat the analyses performed by \citet{kool2019stochastic}, running experiments on neural machine translation (NMT) models; for reproducibility, we use the pretrained Transformer model for WMT'14 \cite{bojar2014findings} English--French made available by \texttt{fairseq}\footnote{\url{https://github.com/pytorch/fairseq/tree/master/examples/translation}} \cite{ott2019fairseq}. We evaluate on the En-Fr \texttt{newstest2014} set, containing 3003 sentences. Further details can be found in \cref{app:results}. 
Our implementation of CPSBS modifies the beam search algorithm from the \texttt{fairseq} library. We additionally consider the beam search, stochastic beam search, diverse beam search, and ancestral sampling algorithms available in \texttt{fairseq}.\looseness=-1

\subsection{Statistical Estimators for Language Generation Models}
Estimators have a large number of applications in machine learning. For example, the REINFORCE algorithm \cite{reinforce} constructs an estimator for the value of the score function; minimum-Bayes risk decoding \cite{kumar-byrne-2004-minimum} uses an estimate of risk in its optimization problem. 
In this section, we compare estimators for sentence-level \bleu score and conditional model entropy for NMT models. 
Notably, NMT models that are trained to minimize cross-entropy with the empirical distribution\footnote{Label-smoothing \cite{label_smoothing} is typically also employed, which leads to even higher entropy distributions.} are not peaky distributions \cite{ott2018analyzing,eikema2020map}; thus, standard estimation techniques, e.g., Monte Carlo, should generally provide good results. However, we can vary the annealing parameter of our model in order to observe the behavior of our estimator with both high- and low-entropy distributions, making this a comprehensive case study. 
Here the annealed model distribution is computed as\looseness=-1
\begin{equation}
    p_\tau(y_t \mid  \yy_{<t}) \propto p\left(y_t \mid  \yy_{<t}\right)^{\frac{1}{\tau}}
\end{equation}
where we should expect a standard Monte Carlo estimator to provide good results at $\tau$ close to 1 when $p$ is naturally high entropy. We test our estimator in this setting so as to give a comparison in a competitive setting.
Specifically, we assess the performance of our estimator of $\mathbb{E}_{\yy \sim p(\yy \mid \xx)}[f(\yy)]$ given in \cref{eq:ht}---using inclusion probability estimates from \cref{eq:is-good} with $M=1$ and with importance weight normalization---in comparison to three other approaches: Monte Carlo (MC) sampling, the sum-and-sample (SAS) estimator, and stochastic beam search (SBS).\looseness=-1

\paragraph{Monte Carlo.} Under the Monte Carlo sampling scheme with sample size $K$, we estimate the expected value of $f$ under our model using \cref{eq:montecarlo} with a sample $\yy^{(1)},\ldots, \yy^{(K)} \overset{\mathrm{i.i.d.}}{\sim} p$.

\begin{figure*}
    \centering
    \includegraphics[width=\textwidth]{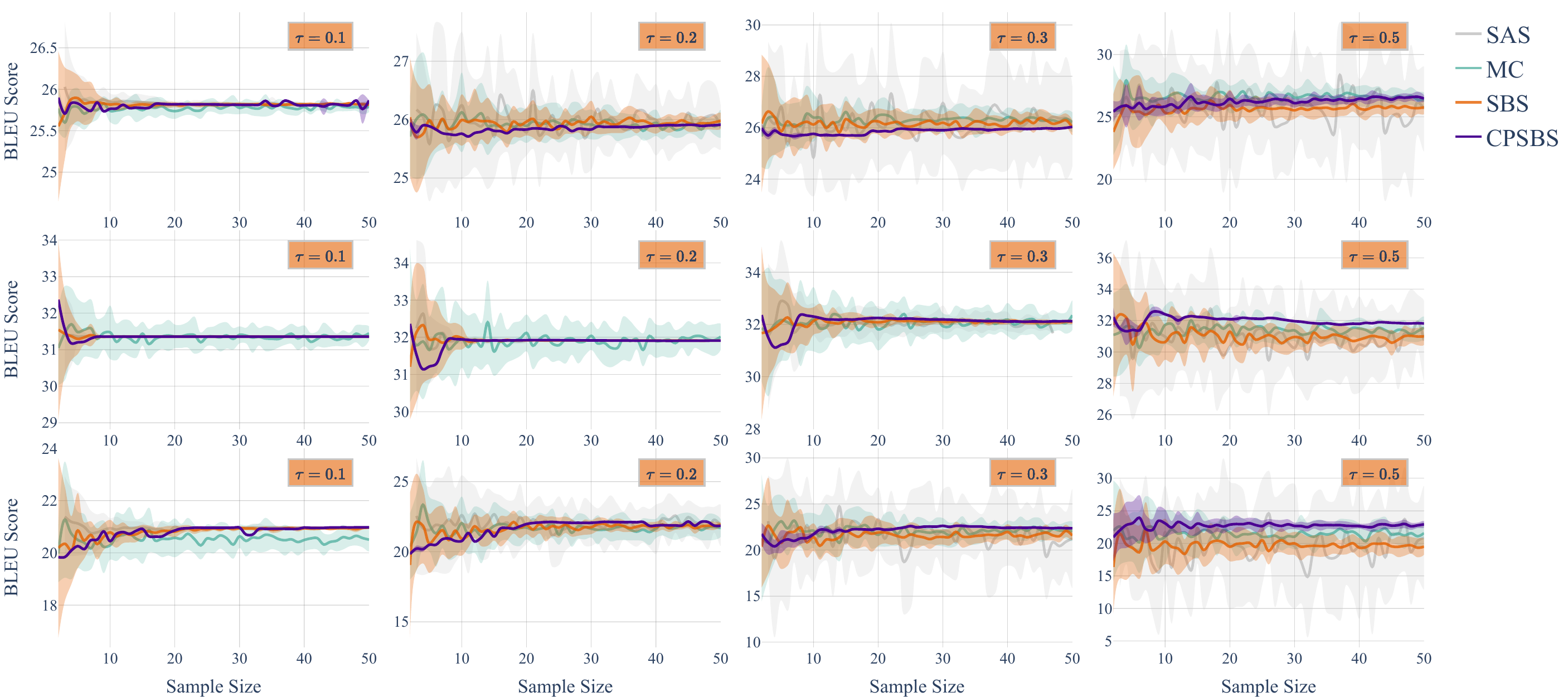}
     \caption{\bleu score estimates for three different sentences using estimators for respective decoding methods. $\tau$ indicates scaling temperature; $\tau$ values and sentences are chosen to mimic \cite{kool2019stochastic}.}
        \label{fig:bleu-experiment}
\end{figure*}
 \paragraph{Sum and Sample.} The sum-and-sample estimator \cite{botev17a,liu2019rao,Kool2020Estimating} is an unbiased estimator that takes as input a deterministically chosen set $Y$ of size $K-1$
    and samples an additional $\yy'$ from the remaining elements, $\mathrm{supp}(p)\setminus Y$, where we obtain the set $Y$ using beam search in our experiments. Formally, the SAS estimator can be written as:
\begin{align}
        G_{\textsc{sas}} \defeq& \sum_{k=1}^{K-1}\ptheta(\yy^{(k)})f(\yy^{(k)})  \\
        & \quad\quad + \left(1-\sum_{k=1}^{K-1}\ptheta(\yy^{(k)}) \right) f(\yy') \nonumber
\end{align}

\paragraph{Stochastic Beam Search.} Stochastic beam search \cite{kool2019stochastic,Kool2020Estimating} is a SWOR algorithm likewise built on top of beam search. The algorithm makes use of truncated Gumbel random variables at each iteration, resulting in a sampling design equivalent to performing the Gumbel-top-$k$ trick \cite{vieira2014gumbel} on the distribution $\ptheta$. Estimators built using SBS likewise follow the Horvitz--Thompson scheme of \cref{eq:ht}; we refer the reader to the original work for inclusion probability computations. They suggest normalizing the estimator by the sum of sample inclusion probabilities to help reduce variance; we therefore likewise perform this normalization in our experiments.\looseness=-1
 \begin{figure*}
    \centering
    \begin{subfigure}[a]{0.9\textwidth}
    \includegraphics[width=\textwidth]{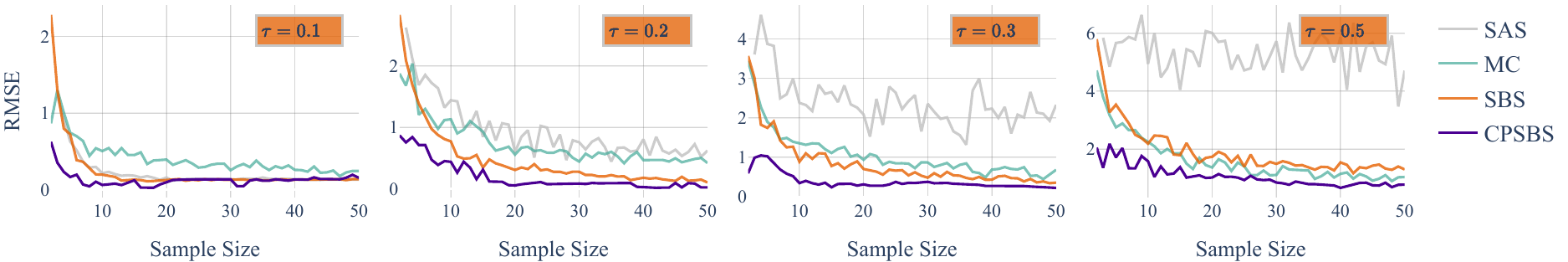}
     \caption{RMSE of \bleu score estimator for different temperatures. Results are averaged across several sentences.\looseness=-1}
        \label{fig:rmse-bleu-experiment}
    \end{subfigure}
    \begin{subfigure}[b]{0.9\textwidth}
    \includegraphics[width=\textwidth]{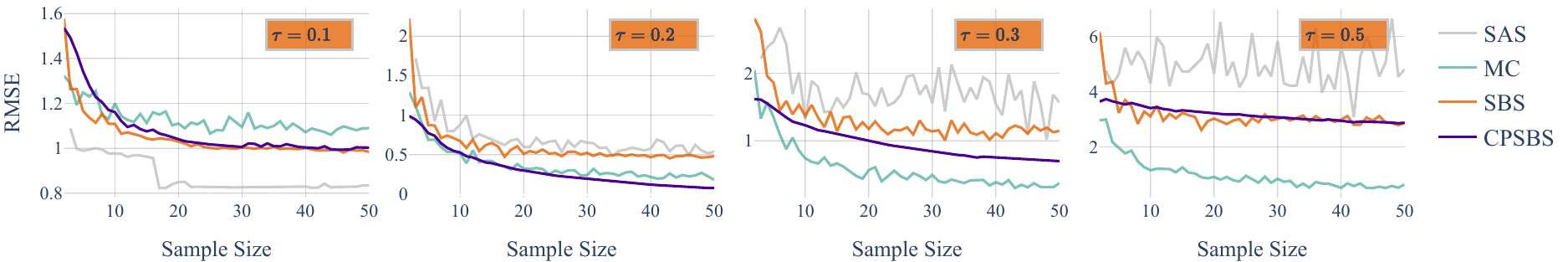}
     \caption{RMSE of conditional model entropy estimator for various temperatures. Results are averaged across several sentences. We see a larger bias under both CPSBS and SBS at higher temperatures in these experiments.\looseness=-1}
        \label{fig:rmse-entropy-experiment}
    \end{subfigure}
    \label{fig:rmse-experiment}
    \caption{RMSE evaluations.}
\end{figure*}

To assess the error of our estimator, we compute its root MSE (RMSE) with respect to a baseline result. While computing the exact value of an expectation is typically infeasible in the sequence model setting, we can average our (unbiased) MC estimator in \cref{eq:montecarlo} over multiple runs to create a good baseline. Specifically, we compute our MC estimator 50 times for a large sample size ($K=200$); variances are reported in \cref{app:results}.\looseness=-1
\footnotetext{We refer the reader to the original work \cite{kool2019stochastic} for equations for inclusion probability estimates.}

Probabilistic models for language generation typically have large vocabularies. In this setting, the computation of \cref{eq:naive} is inefficient due to the large number of items in the set that are assigned very small probability under the model. We experiment with truncating this distribution such that the set of possible extensions of a sequence consist only of the highest probability tokens within the core $n$\% of probability mass ($0.99$ in our experiments), similar to the process in nucleus sampling \cite{holtzman2019curious}. We compare this approach to the original algorithm design in \cref{app:results} and see that empirically, results are virtually unchanged; the following results use this method. We also compare the decoding time of different sampling methods in \cref{fig:runtime}.

\paragraph{\bleu Score Estimation.}
\bleu \cite{Papineni:2002:BMA:1073083.1073135} is a widely used automatic evaluation metric for the quality of machine-generated translations. 
Estimates of \bleu score are used in minimum risk training \cite{shen-etal-2016-minimum} and reinforcement learning-based approaches \cite{RanzatoCAZ15} to machine translation.
As such, accurate and low-variance estimates are critical for the algorithms' performance.
 Formally, we estimate the expected value of $f(\yy) = \bleu(\xx, \yy)$, whose dependence on $\xx$ we leave implicit, under our NMT model $p$  for reference translation $\xx$. For comparison, we use the same sentences and similar annealing temperatures\footnote{Results for $\tau=0.05$ converged rapidly for all estimators, thus not providing an interesting comparison. } $\tau$ evaluated by \citet{kool2019stochastic}. 
We repeat the sampling 20 times and plot the value and standard deviation (indicated by shaded region) 
of different estimators in \cref{fig:bleu-experiment}.
From \cref{fig:bleu-experiment}, we can see that CPSBS has lower variance than our baseline estimators across all temperatures and data points.\footnote{The sampling distribution at $n=1$ is not the same across strategies, hence the difference in variances even at $n=1$.\looseness=-1} 
Especially in the low temperature setting, our estimator converges rapidly with minor deviation from the exact values even for small sample sizes. Additionally, in \cref{fig:rmse-bleu-experiment} we see that the RMSE is typically quite low except at higher temperatures. 
In such cases, we observe the effects of biasedness, similar to \citet{kool2019stochastic}'s observations.\looseness=-1

\paragraph{Conditional Entropy Estimation.}
We perform similar experiments for estimates of a model's conditional entropy, i.e., $f(\yy) = - \log  \ptheta(\yy \mid \xx)$, whose dependence on $\xx$ we again leave implicit.
We show results in \cref{fig:rmse-entropy-experiment}, with plots of the value in \cref{app:results} since results are quite similar to \cref{fig:bleu-experiment}. We see further confirmation that our estimator built on CPSBS is generally quite efficient.

\subsection{Diverse Sampling}
We show how CPSBS can be used as a diverse set sampling design for language generation models. We generate a sample of translations $Y_T \sim \poisson$, i.e., according to the CPSBS scheme, where weights are set as $w_n = p(\yti)/(1-p(\yti))$ at each time step, as recommended in \cref{sec:cps}. 
In \cref{fig:bleu-diversity}, we show the trade-off between minimum, average and maximum sentence-level \bleu score (as a quality measure) and $n$-gram diversity, where we define $n$-gram diversity $D$ as the average fraction of unique vs. total $n$-grams for $n=1,2,3,4$ in a sentence:\looseness=-1
\begin{equation}\label{eq:diversity}
D = \sum_{n=1}^4\frac{\# \textrm{unique $n$-grams in $K$ strings}}{\# \textrm{ $n$-grams in $K$ strings}}
\end{equation}

\begin{figure}
    \centering
    \includegraphics[width=\columnwidth]{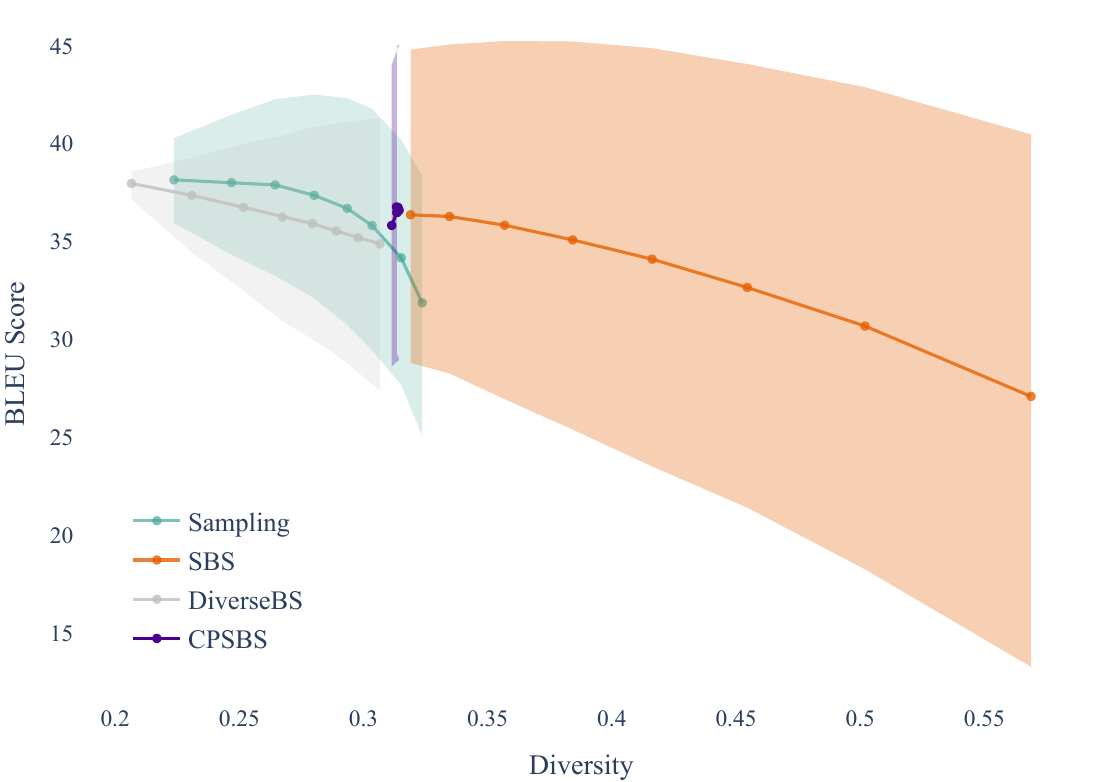}
     \caption{Average ($\pm$ min and max) \bleu score versus diversity for sample size $k = 5$. Points correspond to different annealing temperatures \{0.1, \dots, 0.8\}. Results for $k=10,20$ show very similar trends. }
    \label{fig:bleu-diversity}
\end{figure}
\noindent Metrics are averaged across the corpus. 
We follow the experimental setup of \citet{kool2019stochastic}, using the \texttt{newstest2014} dataset and comparing three different decoding methods: SBS, diverse beam search \citep[DiverseBS;][]{vijayakumar2018diverse} and ancestral sampling.
As in their experiments, we vary the annealing temperature in the range $\{0.1, 0.2, \ldots, 0.8\}$ as a means of encouraging diversity; for DiverseBS we instead vary the \textit{strength parameter} in the same range. 
Interestingly, we see that temperature has virtually no effect on the diversity of the set of results returned by CPSBS. Despite this artifact, for which the authors have not found a theoretical justification,\footnote{While scaling sampling weights by a constant should not change the distribution $\poisson$, an $\exp$ transformation of weights---which is the computation performed by temperature annealing---should.} the set returned by CPSBS is still overall more diverse (position on $x$-axis) than results returned by DiverseBS and reflect better min, max, and average \bleu in comparison to random sampling. SBS provides a better spectrum for the diversity and \bleu tradeoff; we thus recommend SBS when diverse sets are desired.\looseness=-1

\section{Conclusion}

In this work, we present conditional Poisson stochastic beam search, a sampling-without-replacement strategy for sequence models. Through a simple modification to beam search, we turn this mainstay decoding algorithm into a stochastic process. We derive a low-variance, consistent estimator of inclusion probabilities under this scheme; we then present a general framework for using CPSBS to construct statistical estimators for expectations under sequence models.  In our experiments, we observe a reduction in mean square error, and an increase in sample efficiency, when using our estimator in comparison to several baselines, showing the benefits of CPSBS.
\section*{Acknowledgements}

We thank Darcey Riley, as well as our anonymous reviewers, for their helpful feedback. 

\bibliography{anthology,acl}
\bibliographystyle{acl_natbib}
\clearpage
\newpage
\appendix
\onecolumn 
\section{Conditional Poisson Sampling}\label{app:tutorial}
Here we provide a brief overview of the sampling design at the core of CPSBS: conditional Poisson sampling. We consider a base set $B$ where $|\calB| = N$ and we map the elements of $\calB = \{\yy^{(1)}, \ldots, \yy^{(N)}\}$ to the integers $\{1, \ldots, N\}$.
As a warm up, we first consider (unconditional) Poisson sampling, also known as a Bernoulli point process.  To sample a subset $Y \subseteq \calB$, we do as follows: for each element $\yy \in \calB$, we flip a coin where the odds of heads is $w(\yy)$. 
Then, we simply take $Y$ to be the subset of elements whose coin flips were heads. However, this sampling scheme clearly does not guarantee a sample of $K$ items, which can cause problems in our application; sampling more than $K$ items would make the stochastic beam search process inefficient while sampling fewer than $K$---or even 0---items may not leave us with a large enough set at the end of our iterative process.

If instead, we \emph{condition} on the sets always having a prescribed size $K$, i.e., reject samples where $|Y| \ne K$, we arrive at the \emph{conditional} Poisson process.  Formally, the conditional Poisson distribution is defined over $Y \subseteq \calB$ as follows,
\begin{equation}
Q(Y) \defpropto \begin{cases} 
\prod_{\yy \in Y} w(\yy) & \textbf{if } |Y| \!=\! K \\
0 & \textbf{otherwise}
\end{cases}\label{eq:cp-distrib}
\end{equation}
By analyzing \cref{eq:cp-distrib}, we can see that sets with the largest product of weights are the most likely to be sampled; further, this distribution is invariant to rescaling of weights due to the size requirement. This is similar to the conditions under which beam search chooses the set of $K$ largest weight, i.e., highest scoring, elements. Indeed, we note the extreme similarity between \cref{eq:q} and \cref{eq:cp-distrib}, the only difference being a dependence on a prior set. However, unlike beam search, sets with a lower weight product now have the possibility of being chosen.

\section{Proofs}

\subsection{Unbiasedness of the Horvitz--Thompson Estimator}
\begin{prop}

Given a SWOR design $Q$ over the set $\calB = \{1, \dots, N\}$ with inclusion probabilities $\inc(n)$, the Horvitz--Thompson estimator (\cref{eq:ht}) gives us an unbiased estimator of $\E_{n \sim p} f(n)$, where $\func : B \rightarrow \mathbb{R}^d$ is a function whose expectation under $p$ we seek to approximate.
\end{prop}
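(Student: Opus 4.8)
The plan is to prove unbiasedness by computing $\E_{Y \sim Q}[G_{\mathrm{HT}}]$ directly and showing it equals $\sum_{n} p(n)\func(n) = \E_{n \sim p}[\func(n)]$. The one trick needed is the standard Horvitz--Thompson device: rewrite the data-dependent summation $\sum_{n \in Y}$ over the random set $Y$ as a fixed summation $\sum_{n=1}^N$ over the entire base set $\calB$, with an indicator $\mathbbm{1}\{n \in Y\}$ inserted to account for membership. This converts a sum whose \emph{range} is random into a sum whose range is deterministic but whose \emph{summands} are random, at which point linearity of expectation can be applied termwise.

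Concretely, first I would write
\begin{align}
\E_{Y \sim Q}\left[G_{\mathrm{HT}}\right] = \E_{Y \sim Q}\left[\sum_{n \in Y} \frac{p(n)}{\inc(n)} \func(n)\right] = \E_{Y \sim Q}\left[\sum_{n=1}^N \frac{p(n)}{\inc(n)} \func(n)\,\mathbbm{1}\{n \in Y\}\right]. \nonumber
\end{align}
Since the coefficients $p(n)/\inc(n)$ and the values $\func(n)$ are deterministic, linearity of expectation pulls everything except the indicator outside, giving
\begin{align}
\E_{Y \sim Q}\left[G_{\mathrm{HT}}\right] = \sum_{n=1}^N \frac{p(n)}{\inc(n)} \func(n)\,\E_{Y \sim Q}\left[\mathbbm{1}\{n \in Y\}\right]. \nonumber
\end{align}
The remaining expectation is, by definition of the inclusion probability, exactly $\E_{Y \sim Q}[\mathbbm{1}\{n \in Y\}] = \Pr(n \in Y) = \inc(n)$. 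Substituting this in, the factor $\inc(n)$ cancels against the $1/\inc(n)$ weight, leaving $\sum_{n=1}^N p(n)\func(n)$, which is precisely $\E_{n \sim p}[\func(n)]$.

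The calculation is entirely routine, so there is no deep obstacle; the only point I would be careful to state is the well-definedness condition behind the cancellation $\inc(n)/\inc(n) = 1$, namely that $\inc(n) > 0$ for every $n$ contributing a nonzero term, i.e., every element in the support of $p$ must have positive inclusion probability under the design $Q$. I would remark that for the conditional Poisson design this holds whenever all weights $w_n$ are strictly positive, so the estimator is well defined in our setting. This argument is agnostic to the particular SWOR design and uses only the definition of $\inc$, which is why it applies verbatim to the CPSBS inclusion probabilities $\pipoisson$ once they are substituted for $\inc$.
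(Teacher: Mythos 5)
Your proof is correct and follows essentially the same route as the paper's: rewrite the sum over the random set $Y$ as a sum over the full base set with an indicator, apply linearity of expectation, and cancel $\inc(n)$ against the $1/\inc(n)$ weight. Your added remark that $\inc(n)>0$ is needed on the support of $p$ is a fine point of care the paper leaves implicit, but the argument is otherwise identical.
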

\begin{proof}\label{app:ht}
\begin{subequations}
\begin{align}
    \expect{Y \sim Q}{G_{\mathrm{HT}}} &=
    \E_{Y \sim Q} \sum_{n=1}^N \frac{p(n )}{\inc(n )}\, f(n) \\
    &= \E_{Y \sim Q} \sum_{n \in \calB} \frac{p(n )}{\inc(n  )}\,\mathbbm{1}\{n \in Y \}\, f(n) \\
     &= \sum_{n \in \calB} \frac{p(n )}{\inc(n )}\, f(n)\,\E_{Y\sim Q}  \mathbbm{1}\{n \in Y\} \\
          &= \sum_{n \in \calB} \frac{p(n )}{\inc(n  )}\, f(n)\,\inc(n  ) \\
     &= \sum_{n \in \calB} p(n )\, f(n) \\
          &=  \E_{n \sim p}  f(n) 
\end{align}
\end{subequations}
\end{proof}
\subsection{Proofs of Expected Values and Variances of Inclusion Probability Estimators}
\lemmaone*
\begin{proof}\label{app:lemmaone}

Consider the probability of sampling $\hY{1}, \ldots, \hY{\nmax}$ according to $\tildepoisson$.
Algebraic manipulation reveals:
\allowdisplaybreaks
\begin{subequations}\begin{align}
    \tildepoisson(\hY{1}, \ldots, \hY{\nmax})
    &=\frac{\Q(\hY{1}\mid Y_0)}{\piQ(\yy_{\leq 1}\mid Y_0)}\cdots \frac{\Q(\hY{\nmax}\mid \hY{\nmax-1})}{\piQ(\yy_{\leq T}\mid \hY{\nmax-1})} \\
    = &\frac{\poisson(\hY{1}, \ldots, \hY{\nmax})}{\prod_{t=1}^{T} \piQ(\yy_{\leq t} \mid \hY{t-1})}\,\label{eq:ptilddef}
\end{align}\end{subequations}
which proves the identity.
\end{proof}

\proptwo*

\begin{proof}\label{app:proptwo}
\noindent i) The estimator is easily shown to be unbiased:
\begin{equation}
    \E \pipoissonmc(\yy) \defeq \frac{1}{M}\sum_{m=1}^M \mathbbm{1}\Big\{\yy \in Y_T^{(m)}\Big\} = \pipoisson(\yy)
\end{equation}
and its variance may be derived as follows:
\begin{subequations}
\begin{align}
    \mathbb{V}\left[ \pipoissonmc(\yy) \right] &\defeq \mathbb{V}\left[\frac{1}{M}\sum_{m=1}^M \mathbbm{1}\Big\{\yy \in Y_T^{(m)}\Big\} \right] \\
    &=\frac{1}{M} \mathbb{V}\left[ \mathbbm{1}\Big\{\yy \in Y_T^{(m)}\Big\} \right] \\
     &=\frac{1}{M} \left( \E\left(\mathbbm{1}\Big\{\yy \in Y_T^{(m)}\Big\}^2\right) - \E\left(\mathbbm{1}\Big\{\yy \in Y_T^{(m)}\Big\}\right)^2 \right) \\
     &=\frac{1}{M} \left(\pipoisson(\yy) - \pipoisson(\yy)^2 \right)
\end{align}
\end{subequations}
\noindent ii) By the strong law of large numbers, we have
\begin{equation}
    \lim_{M\rightarrow \infty} \frac{1}{M}\sum_{m=1}^M \mathbbm{1}\Big\{\yy \in Y_T^{(m)}\Big\} = \pipoisson(\yy)
\end{equation}
Since $1/x$ is continuous, we may appeal to the continuous mapping theorem to achieve consistency:
\begin{equation}
    \lim_{M\rightarrow \infty} \frac{1}{ \frac{1}{M}\sum_{m=1}^M \mathbbm{1}\Big\{\yy \in Y_T^{(m)}\Big\}} =  \frac{1}{  \lim_{M\rightarrow \infty} \frac{1}{M}\sum_{m=1}^M \mathbbm{1}\Big\{\yy \in Y_T^{(m)}\Big\}} = \frac{1}{\pipoisson(\yy)}
\end{equation}
We can compute the asymptotic variance by the delta rule:
\begin{subequations}
\begin{align}
    \asvar\left[\frac{1}{\pipoissonmc(\yy)}\right] &= \frac{1}{M} \frac{ \mathbb{V}\left[ \pipoissonis(\yy)\right]}{\pipoisson(\yy)^4} \quad \quad\quad\quad\text{\color{gray} (apply the delta rule)} \\ &= \frac{1}{M} \frac{  \pipoisson(\yy) - \pipoisson(\yy)^2}{\pipoisson(\yy)^4} \quad \text{\color{gray} (plugging in the variance computed above)}\\
    &= \frac{1}{M} \left(\frac{1}{\pipoisson(\yy)^3}- \frac{1}{\pipoisson(\yy)^2} \right)
\end{align}
\end{subequations}
\end{proof}

\propthree*
\begin{proof}\label{app:propthree}
\noindent i) We first prove that the estimator of the inclusion probabilities is unbiased through
the following manipulation:
\allowdisplaybreaks
\begin{subequations}\begin{align}
\E\left[ \pipoissonis(\yy)\right] &= \E \Bigg[\frac{1}{M}\sum_{m=1}^M  \prod_{t=1}^T \piQ(\yy_{\leq t} \mid \hY{t-1}^{(m)}) \Bigg]\\
&= \sum_{\hY{1}, \ldots,\hY{T}} \tildepoisson(\hY{1}, \ldots, \hY{\nmax}) \prod_{t=1}^{T} \piQ(\yy_{\leq t} \mid \hY{t-1})\\
&= \sum_{\hY{1}, \ldots,\hY{T}} \tildepoisson(\hY{1}, \ldots, \hY{\nmax}) \frac{\poisson(\hY{1}, \ldots, \hY{\nmax})}{\poisson(\hY{1}, \ldots, \hY{\nmax})} \prod_{t=1}^{T}  \piQ(\yy_{\leq t} \mid \hY{t-1})  \\
&= \sum_{\hY{1}, \ldots,\hY{T}} \poisson(\hY{1}, \ldots, \hY{\nmax}) \frac{\tildepoisson(\hY{1}, \ldots, \hY{\nmax})}{\tildepoisson(\hY{1}, \ldots, \hY{\nmax})} \quad\quad\quad \text{\color{gray} {(\cref{lem:decomposition})}} \\ 
& = \sum_{\hY{1}, \ldots,\hY{T}} \poisson(\hY{1}, \ldots, \hY{\nmax})
\\ 
& = \sum_{Y_1, \ldots, Y_T} \poisson(Y_1, \ldots, Y_{\nmax})\,\mathbbm{1}\Big\{ \yy \in Y_T\Big\} \quad\quad\quad \text{\color{gray} {(definition of $\hY{T}$)}}
\\ 
&= \pipoisson(\yy)
\end{align}\end{subequations}

\noindent ii) To show consistency, we appeal to the strong law of large number and the continuous mapping theorem.
By the strong law of large numbers, we have that 
\begin{equation}
    \lim_{M \rightarrow \infty} \frac{1}{M}\sum_{m=1}^M  \prod_{t=1}^T \piQ(\yy_{\leq t} \mid \hY{t-1}^{(m)}) = \pipoisson(\yy)
\end{equation}
Since $1/x$ is continuous, we have
\begin{subequations}
\begin{align}
     \lim_{M \rightarrow \infty} \frac{1}{\frac{1}{M}\sum_{m=1}^M  \prod_{t=1}^T \piQ(\yy_{\leq t}  \mid \hY{t-1}^{(m)})} &= \frac{1}{\lim_{M \rightarrow \infty} \frac{1}{M}\sum_{m=1}^M  \prod_{t=1}^T \piQ(\yy_{\leq t}  \mid \hY{t-1}^{(m)})} \nonumber \\
     &= \frac{1}{\pipoisson(\yy)}
\end{align}\end{subequations}
which shows consistency. 
Now, we derive a bound on the asymptotic variance of the inverse inclusion probabilities.
First, suppose that 
\begin{equation}
\frac{\prod_{t=1}^T \piQ(\yy_{\leq t}\mid \hY{t-1})}{\pipoisson(\yy)} \leq r,\quad \forall\,\hY{1}, \ldots, \hY{T}
\end{equation}
We start with the variance of importance sampling. This is a standard result \cite{monte-carlo}. 
Then we proceed with algebraic manipulation integrating the assumption above:
\begin{subequations}
\begin{align}
    \sum_{\hY{1}, \ldots,\hY{T}} &\frac{ \mathbbm{1}\Big\{\yy \in \hY{T} \Big\}^2\poisson(\hY{1}, \ldots,\hY{T})^2}{\tildepoisson(\hY{1}, \ldots,\hY{T})} - \pipoisson(\yy)^2 \\
    &=
     \sum_{\hY{1}, \ldots,\hY{T}} \poisson(\hY{1}, \ldots,\hY{T}) \prod_{t=1}^T \piQ(\yy_{\leq t}\mid \hY{t-1}) - \pipoisson(\yy)^2 \\
    &\leq
     \sum_{\hY{1}, \ldots,\hY{T}} \poisson(\hY{1}, \ldots,\hY{T}) \pipoisson(\yy) r - \pipoisson(\yy)^2 \\
     &=
     \pipoisson(\yy) \pipoisson(\yy) r - \pipoisson(\yy)^2 \\
      &=
     \pipoisson(\yy)^2 r - \pipoisson(\yy)^2 \\
     &=(r - 1) \pipoisson(\yy)^2
\end{align}
\end{subequations}
We can compute the asymptotic variance by the delta rule:
\begin{subequations}
\begin{align}
    \asvar\left[\frac{1}{\pipoissonis(\yy)}\right] &= \frac{1}{M} \frac{ \mathbb{V}\left[ \pipoissonis(\yy)\right]}{\pipoisson(\yy)^4} \quad\quad \text{\color{gray} (apply the delta rule)} \\ 
    &\leq \frac{1}{M} \frac{(r - 1) \pipoisson(\yy)^2}{\pipoisson(\yy)^4} \quad \text{\color{gray} (plugging in the above bound)} \\
     &= \frac{1}{M} \frac{(r - 1)}{\pipoisson(\yy)^2}
\end{align}
which proves the result.
\end{subequations}
\end{proof}
\newpage
\section{Pseudocode}\label{app:pseudocode}

\begin{algorithm}[H]
\textbf{Input:} $K$: Size of subset \\
\hspace*{2.7em} $w_1, \ldots, w_N$: weights for each element of the base set \\
\textbf{Output:} $W$: elementary symmetric polynomials of $w_1, \ldots, w_N$; $Z = W_{N, K}$  \\
\begin{algorithmic}[1]
\State $W \gets \boldsymbol{0}^{(K+1) \times (N+1)}$
\State $W_{0,:} = 0;\, W_{:,0} = 1$
\For{$n = 1, \dots, N$}
\For{$k = 1, \dots, K$}
    \State $W_{n,k} \gets W_{n-1,k} + w_n W_{n-1, k-1}$
\EndFor
\EndFor
\State \Return $W$
\end{algorithmic}
\caption{Dynamic program algorithm for $\Z$}
\label{alg:Z}
\end{algorithm}

\newcommand{\dW}[0]{\adj{W}}
\newcommand{\dw}[0]{\adj{w}}

\begin{algorithm}[H]
\textbf{Input:} $K$: Size of subset \\
\hspace*{2.7em} $w_1, \ldots, w_N$: weights for each element of the base set 
\begin{algorithmic}[1]
\State Run \cref{alg:Z} to compute $W$
\LinesComment{The code below was derived by manually apply algorithmic differentiation \cite{bucker2006automatic} to \cref{alg:Z}.}
\State $\dW \gets \boldsymbol{0}^{(K+1) \times (N+1)}$ \rightcomment{Initialize adjoints}
\State $\dw \gets \boldsymbol{0}^{N}$
\State $\dW_{N,K} = 1$  \rightcomment{Initialize output value to $1$}
\For{$n = N, \ldots, 1$}
\For{$k = K, \ldots, 1$}
    \State $\dw_{n} \pluseq \dW_{n,k} \, W_{n-1,k-1}$
    \State $\dW_{n-1,k-1} \pluseq \dW_{n,k} \, w_n$
    \State $\dW_{n-1,k} \pluseq \dW_{n,k}$
\EndFor
\EndFor
\LinesComment{Apply \cref{eq:incl-from-grad}}
\State $\pi \gets \boldsymbol{0}^N$
\For{$n = 1 \ldots N$}
\State $\pi_n \gets \frac{w_n}{\Z}\dw_n$
\EndFor
\Return $\pi$
\end{algorithmic}
\caption{Dynamic program for calculating inclusion probabilities $\picp$}
\label{alg:inc}
\end{algorithm}

\newpage
\section{Experimental Setup and Additional Results}\label{app:results}
We use a Transformer-based model trained according to \citet{ott-etal-2018-scaling} on the WMT'14 English-French dataset.\footnote{available at \url{http://statmt.org/wmt14/translation-task.html}} We use the pre-trained model checkpoints made available by fairseq.\footnote{ \url{https://github.com/pytorch/fairseq/tree/master/examples/translation}}
Data preprocessing steps, model hyperparameters and baseline performances can be found in the original work and on the fairseq website. All evaluations are performed on the \texttt{wmt14.v2.en-fr.newstest2014} version of the \texttt{newstest} data set. We show additional results using the setup in \cref{sec:experiments} in \cref{fig:entropy-experiment,fig:bleu-experiment-unnormalized,fig:bleu-experiment-not-nucleus}. We provide an empirical runtime analysis in \cref{fig:runtime}. \cref{table:baseline-variance} shows the variance of baseline estimator value for the three sentences used in RMSE experiments.

\begin{figure*}[h]
    \centering
    \includegraphics[width=\textwidth]{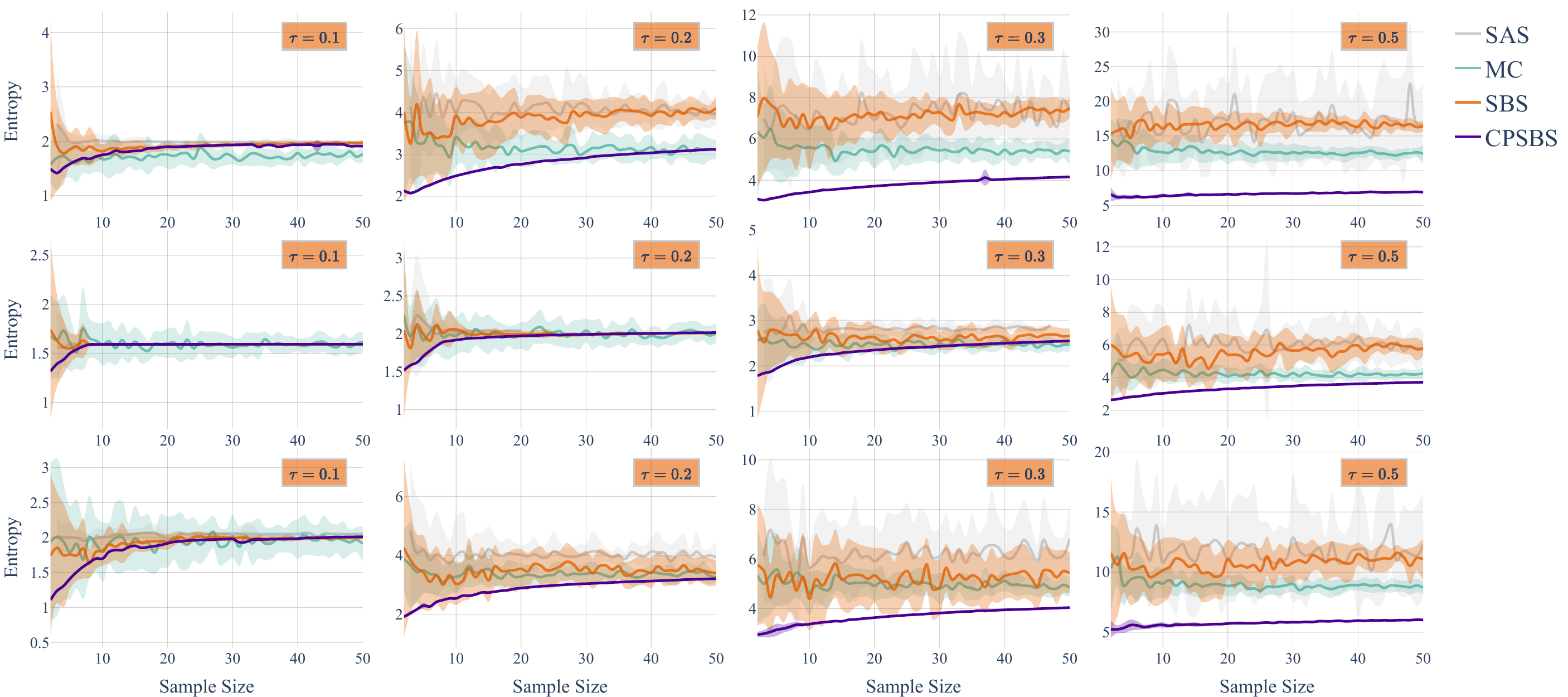}
     \caption{Entropy estimates for three different sentences using estimators for respective decoding methods. $\tau$ indicates scaling temperature. Values are chosen to mimic \cite{kool2019stochastic}.}
        \label{fig:entropy-experiment}
\end{figure*}

\begin{figure*}[ht!]
    \centering
    \includegraphics[width=\textwidth]{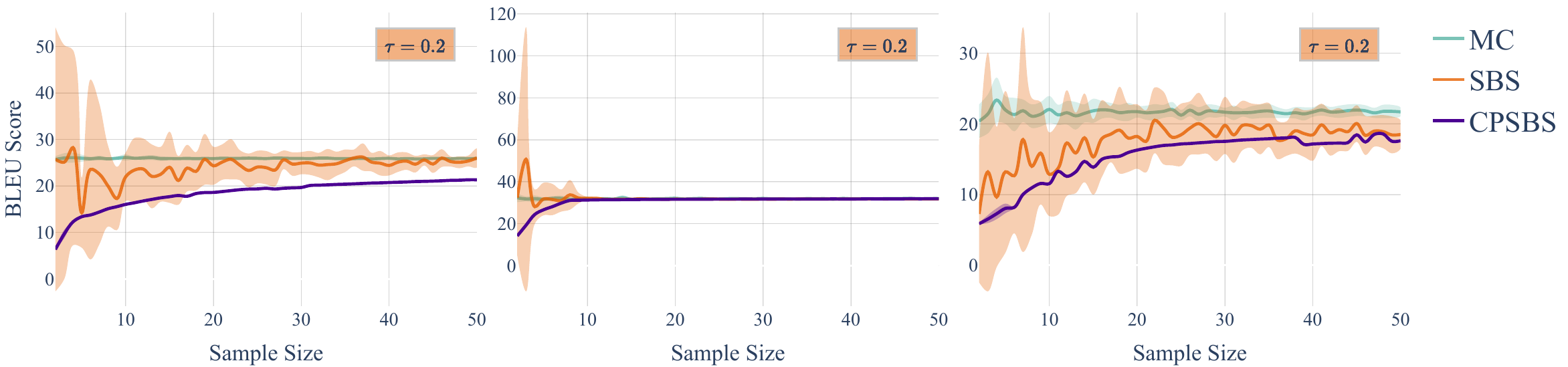}
     \caption{\bleu score estimates using unnormalized versions of SBS and CPSBS estimators. }
        \label{fig:bleu-experiment-unnormalized}
\end{figure*}

\begin{figure*}[ht!]
    \centering
    \includegraphics[width=\textwidth]{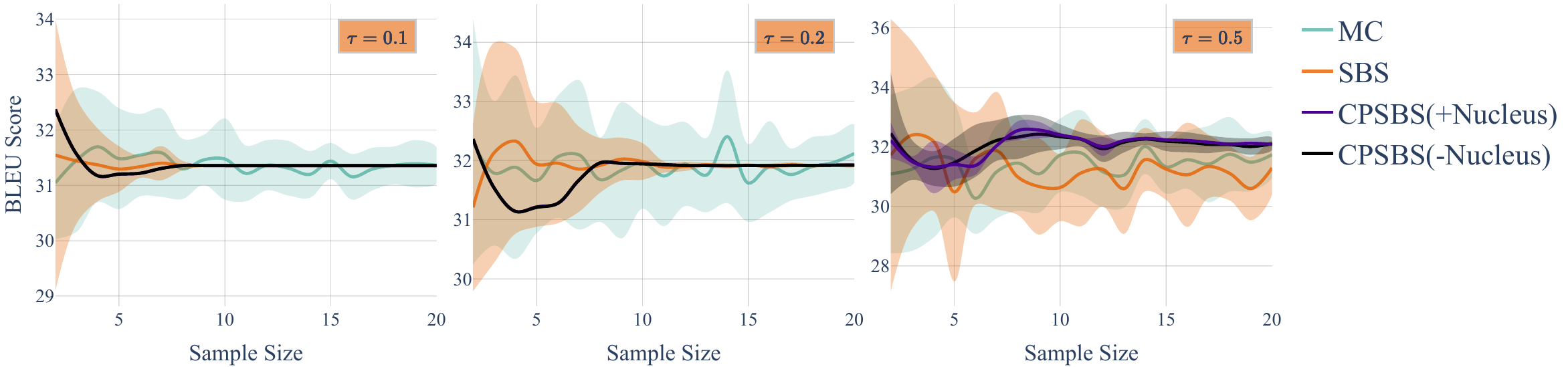}
     \caption{\bleu score estimates for CPSBS both with and without truncation of the sampling distribution. We see that our estimator with truncation provides virtually the same results.}
        \label{fig:bleu-experiment-not-nucleus}
\end{figure*}

\begin{figure*}[ht!]
    \centering
    \includegraphics[width=\textwidth]{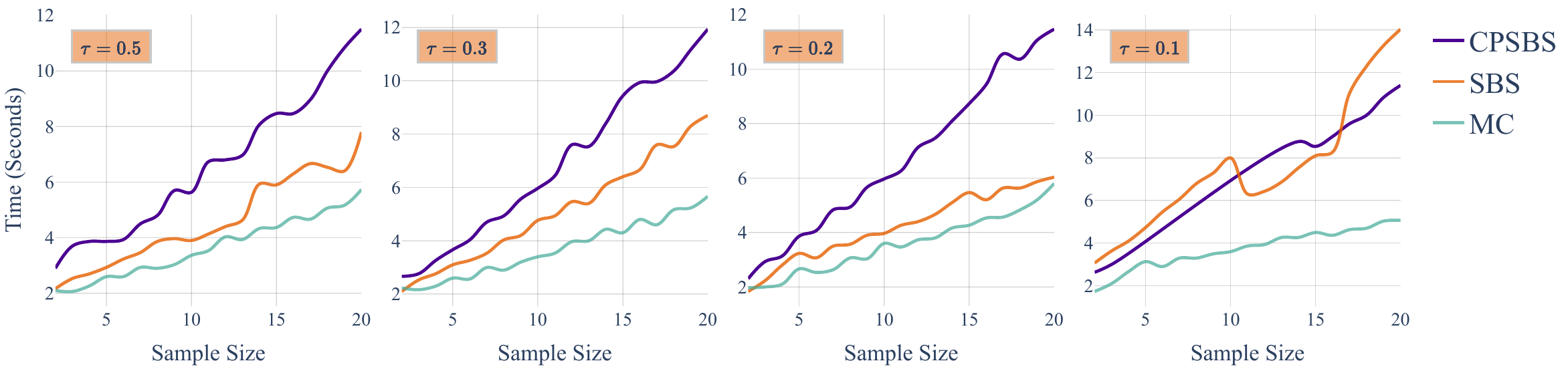}
     \caption{A comparison between decoding time of different sampling methods. The y-axis shows the average decoding time of the three sentences as before. The x-axis shows the number of samples taken for each sentence. All methods are tested on CPU.}
        \label{fig:runtime}
\end{figure*}

\begin{table*}[h]
\centering
\small
\ra{1.3}
\begin{tabular}{@{}rrrrrcrrrr@{}}\toprule
& \multicolumn{4}{c}{\bleu Estimator} & \phantom{abc} & \multicolumn{4}{c}{Entropy Estimator}\\
\cmidrule{2-10}
& \multicolumn{4}{c}{Temperature} & \phantom{abc} & \multicolumn{4}{c}{Temperature}\\
 \cmidrule{2-5}  \cmidrule{7-10} 
& 0.1 & 0.2 & 0.3 & 0.5 &&  0.1 & 0.2 & 0.3 & 0.5 \\ \midrule
Sentence\# 1500&
0.00& 0.00 & 0.03 & 0.08 &&
0.00 & 0.01 & 0.07 & 0.10 \\
Sentence\# 2000&
 0.01	& 0.04 & 0.04 & 0.08 &&
0.00 & 0.00 & 0.00 & 0.02 \\
Sentence\# 2500&
 0.07	& 0.09 & 0.25 & 0.84 &&
0.00 & 0.01 & 0.03 & 0.04 \\
\bottomrule
\end{tabular}
\caption{Variance of baseline estimator (MC for $k=200$ in 50 iterations) for the three sentences. \label{table:baseline-variance}}
\end{table*}

\onecolumn
\inLongVersion{
\section{Proposal Distribution Algorithm}\label{alg:proposal}
\begin{algorithm}[h!]
\textbf{Input:} $K$: number of (desired) samples \\
\hspace*{2.7em} $\nmax$: maximum sequence length \\
\hspace*{2.7em} $\ptheta(\cdot)$: model
\begin{algorithmic}[1]
\Function{sample\_cp}{$\Lambda, k, \mathrm{ind}$}
\State $k = k - 1$
\State $\Lambda \gets  \Lambda[0:\mathrm{ind}] + \Lambda[\mathrm{ind}+1:]+ \Lambda[\mathrm{ind}]$ \Comment{reorder such that weight pertaining to $\mathrm{ind}$ is last}
\State $S \gets [\,\mathrm{ind} \,]$
\State $E \gets \textsc{inclusion\_probs}(\Lambda, k)$
\For{$i \in \mathrm{length}(\Lambda) + 1, \dots, 1$}
    \State  $u \gets \textsc{random\_uniform}()$
    \If{$u < \Lambda[i] * E[k-1][i-1] / E[k][i]$}
        \State $i' \gets i \, \mathbf{if}\, i < \mathrm{ind}\, \mathbf{else}\, i+1$
        \State $S.\mathrm{add}(i')$
        \State $k = k - 1$
        \If{$k == 0$}
            \State \textbf{break}
        \EndIf
    \EndIf
\EndFor
\State \Return $S$
\EndFunction
\end{algorithmic}
\caption{Modified sampling scheme for proposal distribution $Q^{\textsc{CP}}_k$}
\label{alg:proposal}
\end{algorithm}

\subsection{Notes on how to set the weights}

Notice in \cref{eq:inclusion-prob} that if we set $w(\yy)=p_i/(1-p_i)$ that the inclusion probabilities are not equal to $p_i$.  How should we set the parameters $w(\yy)$ so that the inclusion probabilities are proportional to $p_i$?  Unfortunately, there is no closed-form solution.  But there is a numerical solution.

\newcommand{\vw}[0]{\boldsymbol{w}}
Formally, we would like to find $w(\yy)$ such that
$$
J(\vw) \defeq \sum_{i \in \calB} (\pi^{\mathrm{CPS}}_i(\vw) - p^\star_i)^2
$$
is minimized.  Surprisingly, this optimization problem is convex and has a unique solution.

\cite{aires1999algorithms} adjusts the weights to match inclusion probabilities using a numerical root-finding algorithm.

\url{https://www.overleaf.com/5144282174mzzmgkyqcyxd}}

\end{document}